\newcommand{\rbm}[1]{\bm{\mathrm{#1}}}
\def\BState{\State\hskip-\ALG@thistlm}
\DeclareMathOperator*{\argmin}{arg\,min}
\def\BState{\State\hskip-\ALG@thistlm}
\newenvironment{theorem}[2][Theorem]{\begin{trivlist}
\item[\hskip \labelsep {\bfseries #1}\hskip \labelsep {\bfseries #2.}]}{\end{trivlist}}
\title{Address Instance-level Label Prediction in Multiple Instance Learning}
\author{
Minlong Peng
\and
Qi Zhang
\affiliations
Fudan University\\
\emails
\{mlpeng16, qz\}@fudan.edu.cn
}
\begin{document}

\maketitle

\begin{abstract}
\textit{Multiple Instance Learning} (MIL) is concerned with learning from bags of instances, where only bag labels are given and instance labels are unknown. Existent approaches in this field were mainly designed for the bag-level label prediction (predict labels for bags) but not the instance-level (predict labels for instances), with the task loss being only defined at the bag level. This restricts their application in many tasks, where the instance-level labels are more interested. In this paper, we propose a novel algorithm, whose lose is specifically defined at the instance level, to address instance-level label prediction in MIL. We prove that the loss of this algorithm can be unbiasedly and consistently estimated without using instance labels, under the i.i.d assumption.  Empirical study validates the above statements and shows that the proposed algorithm can achieve superior instance-level and comparative bag-level performance, compared to state-of-the-art MIL methods. In addition, it shows that the proposed method can achieve similar results as the fully supervised model (trained with instance labels) for label prediction at the instance level.

\end{abstract}

\section{Introduction}
% What is MIL
%In the typical supervised learning task, we learn a classifier based on a set of instances with each instance associated with a class label. 
In a multiple instance learning (MIL) task, we are to learn a classifier based on a set of bags, where each bag contains multiple instances. And in the setting of MIL, We know the label of each bag but do not know the label of each instance.
% Why MIL
This applies to many real-world tasks, such as medical imaging \cite{quellec2017multiple} (e.g., computational pathology, mammography or CT lung screening), drug discovery (pharmacy) \cite{dietterich1997solving}, classification of text documents \cite{ruiz2015multiple}, speaker identification (signal processing) \cite{mandel2008multiple}, and so on. Therefore,  MIL is an important topic in the machine learning community and many methods have been published in the last few years

% Previous MIL
Existent MIL approaches can be roughly categorized into the instance-level paradigm and the bag-level paradigm. The instance-level methods \cite{zhang2002dd,ray2005supervised,angelidis2018multiple} treat instances of a bag differently and first predict instance labels as interim results. Then, they infer the bag label based on the predicted labels of instances constituting the bag. %The final task loss is defined on the bag-level label prediction. 
The bag-level approaches treat the bag as a whole and directly obtain the bag representation, implicitly by defining distance between bags \cite{wang2000solving}, bag kernels \cite{gartner2002multi} and bag dissimilarities \cite{cheplygina2015multiple}, or explicitly by pooling (may with attention) instance presentations \cite{ilse2018attention}. Then, they infer the bag label based on the resultant bag representation.

Because there are only bag labels, most of these methods were designed to accurately predict bag labels and their loss functions were only defined at the bag level. Even for methods of the instance-level paradigm, the instance-level label prediction is only an interim step for the final bag-level label prediction and no loss is defined on this goal. 
Therefore, these methods generally perform insufficiently for label prediction at the instance level \cite{kandemir2015computer,cheplygina2015multiple}. This restricts their applications in many tasks, where the instance-level label prediction is more interested, such as image segmentation and fine-grained sentiment classification. 

% Method of this work
In this work, we propose a novel MIL algorithm, whose loss function is specifically defined on the instance-level label prediction, to address this problem. The core idea of this algorithm is to unbiasedly estimate the instance-level label prediction loss without using instance labels. We show that this can be achieved if we know the ratio of negative instance under the i.i.d assumption in Theorem \textbf{1}. Further theoretical analysis on the consistency of this algorithm in Theorem \textbf{2} shows that, when using a bounded Bayes consistent loss function, it can achieve similar results as the fully supervised method (trained with instance labels) for label prediction at the instance level. Our experimental study validate the effectiveness of this algorithm at both the bag level and instance level. 
Contributions of this work can be summarized as follows:
\begin{itemize}
\item We propose a novel MIL algorithm, whose loss function is specifically defined on the instance-level label prediction, to address instance-level label prediction in MIL. 
\item We provide a theoretical analysis on the proposed algorithm and prove that, by using a bounded Bayes consistent loss function, the instance-level label prediction loss can be unbiasedly and consistently estimated without knowing instance labels. 
\item Experiment results on both image and text datasets verify the effectiveness of this algorithm for label prediction at both the bag- and instance- levels.
\end{itemize}

\section{Related Work}
Since Dietterich et al. \citeyear{dietterich1997solving} introduced multiple instance learning for drug activity prediction, researchers have proposed a large number of algorithms for the MIL tasks. According to how the information existent in the multiple instance (MI) data is exploited, these algorithms can be categorized into the instance-level paradigm and the bag-level paradigm. 

% more illustration about DD is needed.
In the instance-level paradigm, Diverse Density \cite{maron1998framework} is perhaps the best known framework for MIL. Formally, for a subject $t$, denote $DD(t)=\frac{1}{Z}\prod_i \text{P}(t|B_{i}^+) \prod_i \text{P}(t|B_i^-)$, where $B_i^+$ and $B_i^-$ represent the $i^{th}$ positive and negative bag respectively, and $\text{P}(t|B_{i}^+)$ and $Pr(t|B_{i}^-)$ are defined by:
\begin{align*}
    & \text{P}(t|B_{i}^+) = 1- \prod_j (1-\text{P}(t|B_{ij}^+)) \\
    & \text{P}(t|B_{i}^-) = \prod_j (1-\text{P}(t|B_{ij}^-)),
\end{align*}
where $B^+_{ij}$ and $B^-_{ij}$ is the $j^{th}$ instance in bag $B^+_{i}$ and $B^-_{i}$. The object of this algorithm is to seek the $t$ that maximizes $DD(t)$. Following this framework, many methods have been proposed \cite{ray2005supervised,jia2008instance}. The main difference of these methods lies in the definition of $\text{Pr}(t|B_{i}^+)$ and $\text{Pr}(t|B_{i}^-)$. For example, the Multiple Instance Logistic Regression (MILR) \cite{ray2005supervised} method defines:
\begin{align*}
&\text{P}(t|B_{i}^+) = \text{softmax}_{\alpha} (\text{P}(t|B_{i1}^+), \cdots, \text{P}(t|B_{im}^+) )\\
&\text{P}(t|B_{i}^-) = 1-\text{softmax}_{\alpha} (\text{P}(t|B_{i1}^-), \cdots, \text{P}(t|B_{im}^-))
\end{align*}
where 
\[\text{softmax}_{\alpha}(x_1, \cdots, x_m) = {(\sum_i^m x_ie^{\alpha x_i})}/{(\sum_i^m e^{\alpha x_i}}).\]
%is a smooth approximation to the "max" function (the approximation improves as $\alpha$ increases). 
More recent neural-network-based methods replaces the $\text{softmax}_{\alpha}$ function with the log-sum-exp pooling function \cite{pinheiro2015image,wang2018revisiting}, the max pooling function \cite{wu2015deep}, a weighted function \cite{pappas2017explicit}, and a gated attention function \cite{angelidis2018multiple}. 

Kotzias et al., \citeyear{kotzias2015group} in addition proposed a constrain on the instance-level label prediction based on similarity between instances. They encouraged label predictions to be close for similar instances.
In the line of ultilizing instance similarity, some methods proposed to directly recognize key (positive) instances \cite{zhang2002dd,liu2012key}. The biggest difficulty of these similarity-based methods is that it is hard to design an appropriate distance measure between instances, especially for non-vectorized data like image and text.
% Our work is closest to Hoffmann et al. (2011). They address the same problem we do (binary relation extraction) with a MIML model, but they make two approximations.

In the bag-level paradigm, each bag is treated as a whole. Methods of this paradigm commonly define of a distance function $D(X, Y)$ that provides a way of comparing any two non-vectorial bags $X$ and $Y$. Once this distance function has been defined, it can be used in any standard distance-based classifier such as \textit{Citation}-KNN \cite{wang2000solving}, and SVM \cite{andrews2003support,kwok2007marginalized,doran2014theoretical}.
A representative framework in this paradigm is the MI-kernel \cite{gartner2002multi}, which defines a kernel as a sum of the instance kernels and represents each bag by the minimum and maximum feature values of its instances.
Along the same line, Zhou et al., \citeyear{zhou2009multi} assumed that instances within a bag are not identically independent. Based on this assumption, they proposed another kernel function, which not only used the similarity between pairwise $(x, y)$ where $x \in X$ and $y \in Y$, but also used the similarity between the neighborhood of $x$ in $X$ and the neighborhood of $y$ in $Y$. 
Some other methods directly obtain the bag representation. For example,  Wei et al., \citeyear{wei2017scalable} used a Fisher transform, and ILse et al., \citeyear{ilse2018attention} applied an attention-based pooling algorithm to to obtain the bag representation. The bag representation was further processed by a classifier to the perform the bag-level label prediction.

\section{Preliminaries and Problem Statement}
%In this section, we formulate the multiple instance learning problem and introduce some notations.

\subsection{Multiple Instance Learning}
Let $\rbm{X} \in \mathcal{X}$ and $\rm{Y}_{\rbm{X}} \in \mathcal{Y}_{\mathcal{X}}$ be the instance-level input and output random variables, where $\mathcal{X} \subset R^d$ and $\mathcal{Y}_{\mathcal{X}} = \{0, 1\}$ denote the space of $\rbm{X}$ and $\rm{Y}_{\rbm{X}}$, respectively. Let $\rbm{B} \in \mathcal{B}$ and $\rm{Y}_{\rbm{B}} \in \mathcal{Y}_{\mathcal{B}}$ be the bag-level input and output random variables, where $\mathcal{B} = \{\mathcal{X}\}$ and $\mathcal{Y}_{\mathcal{B}} = \{0, 1\}$ denote the space of $\rbm{B}$ and $\rm{Y}_{\rbm{B}}$, respectively. In a typical multiple instance learning problem, we are given a set of bags and the bag labels, $S = \{(\bm{b}_1, y_{\bm{b}_1}), (\bm{b}_2, y_{\bm{b}_2}), \cdots, (\bm{b}_N, y_{\bm{b}_N})\} \subseteq \mathcal{B} \times \mathcal{Y}_{\mathcal{B}}$, drawn iid according to an unknown distribution, $\mathcal{D}$, over $\mathcal{B} \times \mathcal{Y}_{\mathcal{B}}$. And label of each instance $y_{\bm{x}}$ is unknown. Let $S^+$ and $S^-$ denote the positive ($\rbm{Y}_{\rbm{B}}=1$) and negative ($\rbm{Y}_{\rbm{B}}=0$) bag set, respectively. The standard MIL setting states that:
\begin{itemize}
    \item If bag $\bm{b}$ is negative, then all instances in $\bm{b}$ is negative, i.e.,  $y_{\bm{x}} = 0, \forall \bm{x} \in \bm{b}, \forall \bm{b} \in S^-$.
    \item If bag $\bm{b}$ is positive, then at least one instance in $\bm{b}$ is positive, i.e., $\exists \bm{x} \in \bm{b}, y_{\bm{x}} = 1, \forall \bm{b} \in S^+$.
\end{itemize}

\subsection{Instance-level Risk Minimization in MIL}
In this work, we focus on the instance-level label prediction and solve it from the view of risk minimization.
Let $f:\mathcal{X} \rightarrow \mathcal{C},\; \mathcal{C} \subseteq R$ denotes a classifier, and $h(\bm{x}) = \text{pred} \circ f(\bm{x}): \mathcal{X} \rightarrow \mathcal{Y}_{\mathcal{X}}$ denotes the predicted label of $f(\bm{x})$. A loss function is a map $L: \mathcal{C} \times \mathcal{Y} \rightarrow R^+$. Given any loss function, $L$, and
a classifier, $f$, we define the instance-level $L$-risk of $f$ by:
\begin{equation}
R_L(f) = \mathbb{E}_{\rbm{X}, \rm{Y}_{\rbm{X}}} L(f(\bm{x}), y_{\bm{x}})
\end{equation}
where, as a notation throughout this paper, the $\mathbb{E}$ denotes expectation and
its subscript indicates the random variables with respect to which the expectation is taken. When $L$ happens to be the $0 - 1$ loss, $R_L$ would be the typical Bayes risk:
\begin{equation}
	R_{0-1}(f) = \mathbb{E}_{\rbm{X}, \rm{Y}_{\rbm{X}}} \mathbb{I}(\text{pred} \circ f(\bm{x}) \neq y_{\bm{x}}),
\end{equation}
where $\mathbb{I}$ is the indicator function.
In the risk minimization framework, the objective of this work is to learn a classifier, $f^{*}$, that minimizes $R_L$ under the bag constrains of MIL:
\begin{align} \label{eq:3}
\begin{split}
	f^* &= \argmin_f R_L(f). \\
	&s.t. \quad y_{\bm{x}} = 0, \forall \bm{x} \in \bm{b}, \forall \bm{b} \in S^- \\
 	&\quad \quad \; \exists \; \bm{x} \in \bm{b}, y_{\bm{x}} = 1, \forall \bm{b} \in S^+
\end{split}
\end{align}
The biggest challenge of the above optimization problem is that we do not know the labels of instances. Therefore, it is impossible to estimate $R_L(f)$ with the empirical risk:
\begin{equation} \label{eq:sup_risk_estimate}
\hat{R}^s_L(f) = \frac{1}{\sum_{\bm{b} \in S} |b|} \sum_{\bm{b} \in S} \sum_{\bm{x} \in \bm{b}} L(f(\bm{x}), y_{\bm{x}}),
\end{equation}
where the superscript $s$ of $\hat{R}^s_L$ means the risk is estimated in the supervised setting, where instance labels are available.
In addition, because the $0-1$ loss is non-differentiable, it is necessary to replace it with an appropriate loss function $L$. A common requirement for $L$ is that it should be differentiable. Another critical requirement is that it should be Bayes consistent, which we will illustrate in the following section. 

\subsection{Bayes Consistent Loss Function}

A loss function $L$ is said to be \textit{Bayes consistent} \cite{tewari2007consistency} if the $L$-risk of $f$ gets close to the optimal when the risk of $f$ approaches the lower-bound of Bayes risk. That is, $f^* \rightarrow \sup_f R_L(f)$ implies $f^* \rightarrow \sup_f R_{0-1}(f)$.
According to the work of \citeauthor{bartlett2006convexity}, \citeyear{bartlett2006convexity}, a convex loss function $\Phi: R \rightarrow R^+$ is \textit{Bayes consistent} if it is differentiable at 0 and $\Phi^\prime(0) < 0$, and any minimizer $g$ of
\begin{equation*}
R_{\Phi}(g) = \mathbb{E}_{\rbm{X}, \rm{Y}_{\rbm{X}}} \Phi\left(g(\bm{x})(y_{\bm{x}}-\frac{1}{2})\right)
\end{equation*}
yields a Bayes consistent classifier, that is, $\text{P}(\rm{Y} =1|\rbm{X}=\bm{x}) > \frac{1}{2} \Rightarrow g(\bm{x}) > 0$ and $\text{P}(\rm{Y} = 1|\rbm{X} = \bm{x}) < \frac{1}{2} \Rightarrow g(\bm{x}) < 0$.

\section{Methodology}
In this work, we follow the assumption that instances are identical independent (i.i.d) and the instance distribution is independent to the bag label given its instance label.
According to this assumption, we have:
\begin{align}
    & \text{P}(\rm{Y}_{\rbm{X}}=0|\rm{Y}_{\rbm{B}}=0) = 1 \label{eq:5}\\
    & \text{P}(\rbm{X}|\rm{Y}_{\rbm{X}}, \rm{Y}_{\rbm{B}}) = \text{P}(\rbm{X}| \rm{Y}_{\rbm{X}}) \label{eq:6}\\
    & \text{P}(\rbm{X}|\rm{Y}_{\rbm{X}}=0) =\text{P}(\rbm{X}|\rm{Y}_{\rbm{B}}=0) \label{eq:7}\\
    & \text{P}(\rm{Y}_{\rbm{X}}=1)=\text{P}(\rm{Y}_{\rbm{X}}=1, \rm{Y}_{\rbm{B}}=1). \label{eq:8}
\end{align}
In the following, we show how to estimate the instance-level label prediction loss $R_L$ without using instance labels built on this assumption.

\subsection{Estimate $R_L$  without Instance Labels}
In this section, we prove that $R_L$ can be unbiasely estimated without using instance labels as detailed in the following theorem. 
\begin{theorem}{1} \label{theorem:unbias}
$R_L$ can be unbiasely estimated by:
\begin{equation} \label{eq:our_risk_estimate}
\begin{split}
&\hat{R}^u_L(f) = \frac{1}{\sum_{\bm{b}\in S} |\bm{b}|} \sum_{\bm{b}\in S} \sum_{\bm{x} \in \bm{b}} L(f(\bm{x}), 1) \\
& + \frac{\text{P}(\rm{Y}_{\rbm{X}}=0)}{\sum_{\bm{b}\in S^-} |\bm{b}|} \sum_{\bm{b}\in S^-} \sum_{\bm{x} \in \bm{b}} \left(L(f(\bm{x}), 0) - L(f(\bm{x}), 1)\right).
\end{split}
\end{equation}
Here the superscript $u$ of $\hat{R}^u_L$ means that the risk is estimated in the unsupervised setting, where instance labels are not available.
\end{theorem}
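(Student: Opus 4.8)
The plan is to read the estimator (\ref{eq:our_risk_estimate}) as a Monte-Carlo object and establish unbiasedness in the usual sense, i.e.\ $\mathbb{E}_S[\hat{R}^u_L(f)] = R_L(f)$ with the outer expectation taken over the i.i.d.\ draw of the data. Since (\ref{eq:our_risk_estimate}) has exactly the shape of the classical positive--unlabeled risk rewriting, I would split the argument into two stages: first rewrite the \emph{population} risk $R_L(f)$ as a sum of two expectations that involve only (i) the instance marginal $\text{P}(\rbm{X})$ and (ii) the conditional $\text{P}(\rbm{X}|\rm{Y}_{\rbm{X}}=0)$; then match each of these expectations to one of the two empirical averages appearing in the estimator.

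For the population rewriting I would write $\pi_0 = \text{P}(\rm{Y}_{\rbm{X}}=0)$ and $\pi_1 = \text{P}(\rm{Y}_{\rbm{X}}=1)$ and condition the definition of $R_L$ on the instance label,
\begin{equation*}
R_L(f) = \pi_1\, \mathbb{E}_{\rbm{X}|\rm{Y}_{\rbm{X}}=1}L(f(\bm{x}),1) + \pi_0\, \mathbb{E}_{\rbm{X}|\rm{Y}_{\rbm{X}}=0}L(f(\bm{x}),0).
\end{equation*}
The unobservable positive-instance term is then eliminated via the law of total expectation applied to the label-$1$ loss, $\mathbb{E}_{\rbm{X}}L(f(\bm{x}),1) = \pi_1\,\mathbb{E}_{\rbm{X}|\rm{Y}_{\rbm{X}}=1}L(f(\bm{x}),1) + \pi_0\,\mathbb{E}_{\rbm{X}|\rm{Y}_{\rbm{X}}=0}L(f(\bm{x}),1)$; solving for the first summand and substituting gives
\begin{equation*}
R_L(f) = \mathbb{E}_{\rbm{X}}L(f(\bm{x}),1) + \pi_0\, \mathbb{E}_{\rbm{X}|\rm{Y}_{\rbm{X}}=0}\bigl(L(f(\bm{x}),0) - L(f(\bm{x}),1)\bigr).
\end{equation*}
I would then invoke (\ref{eq:7}), $\text{P}(\rbm{X}|\rm{Y}_{\rbm{X}}=0)=\text{P}(\rbm{X}|\rm{Y}_{\rbm{B}}=0)$, to replace the instance-negative conditional expectation by the bag-negative one, so that the correction term refers only to instances drawn from negative bags.

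It then remains to pass from these two population expectations to the sample averages in (\ref{eq:our_risk_estimate}). Under the i.i.d.\ assumption every instance pooled over all bags is a draw from the instance marginal $\text{P}(\rbm{X})$, so the first average is unbiased for $\mathbb{E}_{\rbm{X}}L(f(\bm{x}),1)$; likewise, by (\ref{eq:5}) the instances of $S^-$ all carry label $0$ and, by (\ref{eq:7}), follow $\text{P}(\rbm{X}|\rm{Y}_{\rbm{X}}=0)$, so the second average is unbiased for $\mathbb{E}_{\rbm{X}|\rm{Y}_{\rbm{X}}=0}(L(f(\bm{x}),0)-L(f(\bm{x}),1))$, carrying exactly the weight $\pi_0$. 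Assumptions (\ref{eq:6}) and (\ref{eq:8}) are what keep the generative model consistent, guaranteeing that the two instance pools have precisely the marginals just claimed.

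The step I expect to be the main obstacle is this last, sampling step rather than the algebra: one must argue carefully that averaging over instances that are grouped into bags --- with the data-dependent normalizers $\sum_{\bm{b}\in S}|\bm{b}|$ and $\sum_{\bm{b}\in S^-}|\bm{b}|$ --- still produces unbiased estimates of the respective population expectations, and that the mixing weight implicitly recovered from the two pools is indeed $\pi_0$. This is exactly where (\ref{eq:5})--(\ref{eq:8}) together with the i.i.d.\ assumption have to be combined, since without them the pooled empirical distribution of instances need not coincide with either $\text{P}(\rbm{X})$ or $\text{P}(\rbm{X}|\rm{Y}_{\rbm{X}}=0)$, and unbiasedness would fail.
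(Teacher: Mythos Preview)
Your proposal is correct and reaches the same population identity as the paper, namely
\[
R_L(f)=\mathbb{E}_{\rbm{X}}L(f(\bm{x}),1)+\pi_0\,\mathbb{E}_{\rbm{X}|\rm{Y}_{\rbm{B}}=0}\bigl(L(f(\bm{x}),0)-L(f(\bm{x}),1)\bigr),
\]
but you get there by a genuinely different route. You condition directly on the \emph{instance} label $\rm{Y}_{\rbm{X}}$ and apply the standard positive--unlabeled trick (law of total expectation on the label-$1$ loss), so your population rewriting needs only assumption~(\ref{eq:7}). The paper instead introduces $\rm{Y}_{\rbm{B}}$ into the expectation and conditions first on the \emph{bag} label, then on the instance label within $\rm{Y}_{\rbm{B}}=1$; this forces it to invoke all four assumptions (\ref{eq:5})--(\ref{eq:8}) along the way before collapsing back to the same formula. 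Your derivation is shorter and more transparent as a PU-learning identity; the paper's detour through $\rm{Y}_{\rbm{B}}$ is more faithful to the bag-structured data generation and makes explicit where each MIL assumption enters. You also flag something the paper does not: the data-dependent normalizers $\sum_{\bm{b}}|\bm{b}|$ make exact unbiasedness of the empirical averages a nontrivial claim, whereas the paper simply asserts that $\hat R^u_L$ is unbiased for the right-hand side once the population identity is in place.
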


\begin{proof}
$R_L$ can be reformulated as 
\begin{equation*}
R_L(f) = \mathbb{E}_{\rbm{X}, \rm{Y}_{\rbm{B}}, \rm{Y}_{\rbm{X}}} L(f(\bm{x}), y_{\bm{x}}),
\end{equation*}
and divided into two parts by bag label:
\begin{equation} \label{eq:divide_by_yb}
\begin{split}
    &R_L(f)  = \text{P}(\rm{Y}_{\rbm{B}}=0) \mathbb{E}_{\rbm{X}|\rm{Y}_{\rbm{B}}=0} \mathbb{E}_{\rm{Y}_{\rbm{X}}|\rbm{X}, \rm{Y}_{\rbm{B}}=0}  L(f(\bm{x}), y_{\bm{x}}) \\
    &\quad + \text{P}(\rm{Y}_{\rbm{B}}=1) \mathbb{E}_{\rbm{X}|\rm{Y}_{\rbm{B}}=1} \mathbb{E}_{\rm{Y}_{\rbm{X}}|\rbm{X}, \rm{Y}_{\rbm{B}}=1}  L(f(\bm{x}), y_{\bm{x}}) \\
\end{split}
\end{equation}
According to Eq. (\ref{eq:5}), we have
\begin{equation} \label{eq:12}
\begin{split}
&\text{P}(\rm{Y}_{\rbm{B}}=0) \mathbb{E}_{\rbm{X}|\rm{Y}_{\rbm{B}}=0} \mathbb{E}_{\rm{Y}_{\rbm{X}}|\rbm{X}, \rm{Y}_{\rbm{B}}=0} \; L(f(\bm{x}), y_{\bm{x}}) \\ 
&= \text{P}(\rm{Y}_{\rbm{B}}=0) \mathbb{E}_{\rbm{X}|\rm{Y}_{\rbm{B}}=0} L(f(\bm{x}), 0).
\end{split}
\end{equation} 
The second term of the right item of Eq. (\ref{eq:divide_by_yb}) can be further formulated as:
\begin{align} \label{eq:divide_by_yx}
\begin{split}
    &\text{P}(\rm{Y}_{\rbm{B}}=1) \mathbb{E}_{\rbm{X}|\rm{Y}_{\rbm{B}}=1} \mathbb{E}_{\rm{Y}_{\rbm{X}}|\rbm{X}, \rm{Y}_{\rbm{B}}=1} L(f(\bm{x}), y_{\bm{x}}) \\
    & = \text{P}(\rm{Y}_{\rbm{B}}=1)\mathbb{E}_{\rm{Y}_{\rbm{X}}|\rm{Y}_{\rbm{B}}=1} \mathbb{E}_{\rbm{X}|\rm{Y}_{\rbm{X}}, \rm{Y}_{\rbm{B}}=1}  L(f(\bm{x}), y_{\bm{x}}) \\
    & =  \text{P}(\rm{Y}_{\rbm{X}}=1, \rm{Y}_{\rbm{B}}=1) \mathbb{E}_{\rbm{X}|\rm{Y}_{\rbm{X}}=1, \rm{Y}_{\rbm{B}}=1} L(f(\bm{x}), 1) \\
    & + \text{P}(\rm{Y}_{\rbm{X}}=0, \rm{Y}_{\rbm{B}}=1) \mathbb{E}_{\rbm{X}|\rm{Y}_{\rbm{X}}=0, \rm{Y}_{\rbm{B}}=1} L(f(\bm{x}), 0).
\end{split}
\end{align}
According to Eq. (\ref{eq:6}) and (\ref{eq:7}), we have
\begin{equation*}
\begin{split}
 \mathbb{E}_{\rbm{X}|\rm{Y}_{\rbm{X}}=1, \rm{Y}_{\rbm{B}}=1} L(f(\bm{x}), 1) &= \mathbb{E}_{\rbm{X}|\rm{Y}_{\rbm{X}}=1} L(f(\bm{x}), 1) \\
 \mathbb{E}_{\rbm{X}|\rm{Y}_{\rbm{X}}=0, \rm{Y}_{\rbm{B}}=1} L(f(\bm{x}), 0) &= \mathbb{E}_{\rbm{X}|\rm{Y}_{\rbm{B}}=0} L(f(\bm{x}), 0),
\end{split}
\end{equation*}
thus:
\begin{align} \label{eq:14}
\begin{split}
    &\text{P}(\rm{Y}_{\rbm{B}}=1) \mathbb{E}_{\rbm{X}|\rm{Y}_{\rbm{B}}=1} \mathbb{E}_{\rm{Y}_{\rbm{X}}|\rbm{X}, \rm{Y}_{\rbm{B}}=1} L(f(\bm{x}), y_{\bm{x}}) \\
    & =  \text{P}(\rm{Y}_{\rbm{X}}=1, \rm{Y}_{\rbm{B}}=1) \mathbb{E}_{\rbm{X}|\rm{Y}_{\rbm{X}}=1} L(f(\bm{x}), 1) \\
    &+ \text{P}(\rm{Y}_{\rbm{X}}=0, \rm{Y}_{\rbm{B}}=1) \mathbb{E}_{\rbm{X}|\rm{Y}_{\rbm{B}}=0} L(f(\bm{x}), 0).
\end{split}
\end{align}
Because 
\begin{equation*}
\resizebox{\columnwidth}{!}{$
    \text{P}(\rbm{X}) = \text{P}(\rm{Y}_{\rbm{X}}=1) \text{P}(\rbm{X}|\rm{Y}_{\rbm{X}}=1) + \text{P}(\rm{Y}_{\rbm{X}}=0) \text{P}(\rbm{X}|\rm{Y}_{\rbm{X}}=0),$}
\end{equation*}
we have
\begin{equation} \label{eq:15}
\begin{split}
    \mathbb{E}_{\rbm{X}|\rm{Y}_{\rbm{X}}=1} L(f(\bm{x}), 1) & = \frac{1}{\text{P}(\rm{Y}_{\rbm{X}}=1)}( \mathbb{E}_{\rbm{X}} L(f(\bm{x}), 1) \\
    & - \text{P}(\rm{Y}_{\rbm{X}}=0) \mathbb{E}_{\rbm{X}|\rm{Y}_{\rbm{B}}=0} L(f(\bm{x}), 1) ).
\end{split}
\end{equation}
Combining Eq. (\ref{eq:8}), (\ref{eq:divide_by_yb}), (\ref{eq:12}), (\ref{eq:14}), and (\ref{eq:15}), we can obtain
\begin{align} \label{eq:eq_15}
\begin{split}
    &R_L(f) = \mathbb{E}_{\rbm{X}} L(f(\bm{x}), 1)\\
    &+\text{P}(\rm{Y}_{\rbm{X}}=0)\mathbb{E}_{\rbm{X}|\rm{Y}_{\rbm{B}}=0} \left(L(f(\bm{x}), 0)- L(f(\bm{x}), 1)\right).
\end{split}
\end{align}
Note that $\hat{R}^u_L(f)$ is an unbiased estimation of the right term of Eq. \ref{eq:eq_15}. Therefore, $\hat{R}^u_L(f)$ is an unbiased estimation of $R_L(f)$, completing the proof.
\end{proof}

\subsection{Consistency with Bounded Bayes Consistent Loss Function}
In this section, we analyze the consistency between $\hat{R}_L^u$ and $\hat{R}_L^s$ (all proofs appear in \textbf{Appendix A}\footnote{\url{https://drive.google.com/file/d/1MhvgHYtQo_9F2QcN0sGGNsK0lm6vBsEc/view?usp=sharing}}).

Let denote $L_M$ the Lipschitz constant that $L_M > \frac{\partial L(w, y)}{\partial w}, \forall w \in R$, denote $C_0 = \max_y L(0, y)$, and denote $\mathcal{H}$ a Reproducing Kernel Hilbert Space (RKHS). For each given $R > 0$, we consider as hypothesis space $\mathcal{H}_R$, the ball of radius $R$ in $\mathcal{H}$. Let $N(\epsilon)$ be the covering number of $\mathcal{H}_R$ following Theorem C in \cite{cucker2002mathematical}, and denote $m$ the instance number in $S^-$. Then we have the following theorem. 
\begin{theorem}{2} \label{theorem:consistency}
For a Bayes consistent loss function $L$, if it is bounded by $[0, M]$, then for any $\epsilon > 0$, 
\begin{equation}
\begin{split}
&\rm{P}\{S \in \mathcal{D}|\sup_{f \in \mathcal{H}_R} |{R}_L-\hat{R}_L^u| \leq \epsilon\} \\
&\geq 1 - 2N(\frac{\epsilon}{4(1+2\pi_0)L_M}) e^{-\frac{m\epsilon^2}{8(1+2\pi_0)^2B^2}},
\end{split}         
\end{equation}
where $\pi_0=\rm{P}(\rm{Y}_{\rbm{X}}=0)$ and $B = L_M M + C_0$.
\end{theorem}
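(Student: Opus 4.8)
The plan is to establish uniform convergence of $\hat{R}_L^u$ to $R_L$ over the hypothesis ball $\mathcal{H}_R$ by the classical recipe of a covering-number discretization combined with a Hoeffding-type concentration bound, using the unbiasedness $\mathbb{E}\,\hat{R}_L^u(f)=R_L(f)$ already established in Theorem~1. The starting point is the decomposition, read off from Eq.~(\ref{eq:eq_15}), $R_L(f)-\hat{R}_L^u(f) = \big(\mathbb{E}_{\rbm{X}}L(f(\bm{x}),1)-\tfrac{1}{n}\sum_{\bm b\in S}\sum_{\bm x\in\bm b}L(f(\bm{x}),1)\big) + \pi_0\big(\mathbb{E}_{\rbm{X}|\rm{Y}_{\rbm{B}}=0}\Delta L(\bm x)-\tfrac{1}{m}\sum_{\bm b\in S^-}\sum_{\bm x\in\bm b}\Delta L(\bm x)\big)$, where $n=\sum_{\bm b\in S}|\bm b|$ and $\Delta L(\bm x)=L(f(\bm{x}),0)-L(f(\bm{x}),1)$. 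Each bracket is the deviation of an empirical mean of bounded i.i.d.\ instance-level quantities from its expectation, so the whole difference is a centered empirical process indexed by $f\in\mathcal{H}_R$.

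I would then assemble the bound from two ingredients. First, \emph{pointwise concentration}: for a fixed $f$, bound the per-instance contribution in absolute value by $B=L_M M+C_0$, using $|L(f(\bm x),y)|\le C_0+L_M|f(\bm x)|$ together with the uniform bound on $|f(\bm x)|$ available on the RKHS ball of radius $R$; since $L(f(\bm{x}),1)$ enters with weight $1$ and $\Delta L$ with weight $\pi_0$ as a difference of two losses, the combined summand lies in an interval of length $2(1+2\pi_0)B$. Hoeffding's inequality at deviation level $\epsilon/2$ then yields $\mathbb{P}\big(|R_L(f)-\hat{R}_L^u(f)|>\epsilon/2\big)\le 2\exp\!\big(-\tfrac{m\epsilon^2}{8(1+2\pi_0)^2B^2}\big)$, which is exactly the per-center tail in the statement. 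Second, \emph{Lipschitz transfer}: take an $\epsilon'$-cover of $\mathcal{H}_R$ in sup-norm (the regime of Theorem~C of \cite{cucker2002mathematical}) with $\epsilon'=\epsilon/(4(1+2\pi_0)L_M)$, so that within each cell $|L(f,1)-L(f_j,1)|\le L_M\epsilon'$ and $|\Delta L(f)-\Delta L(f_j)|\le 2L_M\epsilon'$; weighting by $1$ and $\pi_0$ gives oscillation at most $(1+2\pi_0)L_M\epsilon'=\epsilon/4$ for both $R_L$ and $\hat{R}_L^u$ inside a cell. A union bound over the $N(\epsilon')$ centers and the triangle inequality $\tfrac{\epsilon}{4}+\tfrac{\epsilon}{2}+\tfrac{\epsilon}{4}=\epsilon$ then upgrade the per-center tail to the uniform statement, producing the factor $2N(\tfrac{\epsilon}{4(1+2\pi_0)L_M})$.

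Tracking these constants is where the care lies: the composite factor $(1+2\pi_0)$ traces back to adding the weight-$1$ term to the weight-$\pi_0$ difference (which contributes the factor $2$), and it must be propagated consistently through both the range used in Hoeffding and the radius used in the cover so that the $8(1+2\pi_0)^2$ and the $4(1+2\pi_0)L_M$ emerge in the right places. The genuine obstacle, however, is that the two empirical averages are normalized differently---the first over all $n$ instances of $S$, the second over only the $m$ instances of $S^-$---whereas the final exponent depends on $m$ alone. Reconciling this requires exploiting $m\le n$ (so that the first term concentrates at least as fast as claimed) and combining the two bounded contributions into a single $m$-governed deviation, rather than naively treating $\hat{R}_L^u$ as one empirical mean; handling this mismatch cleanly, together with confirming that the Cucker--Smale cover is indeed a sup-norm cover so that the Lipschitz transfer is legitimate, is the step I expect to be most delicate.
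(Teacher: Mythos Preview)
Your proposal is correct and follows essentially the same route as the paper. The paper's proof differs only cosmetically: it splits your $\pi_0\,\Delta L$ term into two separate $\pi_0$-weighted single-loss deviations (so three terms rather than two, which is where the $1+2\pi_0$ arises there), and it packages the Hoeffding-plus-covering step by invoking a ready-made uniform-deviation lemma from \cite{rosasco2004loss} applied to each term, then keeps the worst ($m$-governed) exponent and rescales $\epsilon\mapsto\epsilon/(1+2\pi_0)$ at the end---exactly the mechanism you spell out by hand.
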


\paragraph{Remark 2.} Let us think about what if $L$ is unbounded, or more specifically, not upper bounded. For a given example $\bm{x} \in S^-$ and $\bm{x} \not \in S^+$, its corresponding risk within $\hat{R}_L^u$ is $V(\bm{x}) = L(f(\bm{x}), 0)-\frac{n}{n+m}L(f(\bm{x}), 1)$. Because $L$ is not upper bounded, to achieve smaller risk on $\bm{x}$, $f$ can heavily overfit $\bm{x}$ making $L(f(\bm{x}), 1) \rightarrow +\infty$ and accordingly $L(f(\bm{x}), 0) \rightarrow 0$. Thus, $V(\bm{x}) \rightarrow -\infty$. From this analysis, we can expect that, when using a unbounded loss function and a flexible classifier, $\hat{R}_L^u$ will dramatically decrease to a far below zero value. This is indeed observed in our experiments.

\subsection{Formulate Bag Constrains of MIL}
Note that the above estimation of $R_L$ does not take account the instantiation of each bag, and it may not satisfy the bag constrains of MIL specified in Eq. (\ref{eq:3}). In this section, we show how to combine $R_L$ estimation with constrains of MIL from the risk view.  

From the risk view, the constrains can be formulated as:
\begin{align*}
&\max_{\bm{x} \in \bm{b}} L( f(\bm{x}), 0) - L(f(\bm{x}), 1) < 0, \forall \bm{b} \in S^- \\
&\max_{\bm{x} \in \bm{b}} L( f(\bm{x}), 0) - L(f(\bm{x}), 1) > 0, \forall \bm{b} \in S^+.
\end{align*}
Therefore, the optimization problem of Eq. (\ref{eq:3}) can be approximated by:
\begin{equation}
\begin{split}
\quad & \quad \quad \quad \quad \quad \quad \quad \min_f \; \hat{R}^u_L \\
& s.t. \max_{\bm{x} \in \bm{b}} L( f(\bm{x}), 0) - L(f(\bm{x}), 1) < 0, \forall \bm{b} \in S^- \\
& \quad \; \; \max_{\bm{x} \in \bm{b}} L( f(\bm{x}), 0) - L(f(\bm{x}), 1) > 0, \forall \bm{b} \in S^+.
\end{split}
\end{equation}
The above formulation assume that the data is exactly separable by $f$. This is not guaranteed for many cases. Even if $f$ pretty complicate and exactly separates the training bags, it usually suffers from severely overfitting problem and is susceptible to outliers. To regularize the above algorithm, we reformulate the optimization problem as:
\begin{equation}
\begin{split}
& \quad \quad \quad \quad \min_f \; \hat{R}^u_L +C (\sum_{i=1}^{|S^-|} \xi_i + \sum_{j=1}^{|S^+|} \xi_j)\\
& s.t. \max_{\bm{x} \in \bm{b}_i} L( f(\bm{x}), 0) - L(f(\bm{x}), 1) < \xi_i, \forall \bm{b}_i \in S^-, \\
& \quad \; \; \max_{\bm{x} \in \bm{b}_j} L( f(\bm{x}), 0) - L(f(\bm{x}), 1) > -\xi_j, \forall \bm{b}_j \in S^+, \\
& \quad \quad \xi_i > 0,  i=1, \cdots, |S^-|; \quad \xi_j > 0, j=1, \cdots, |S^+|.
\end{split}
\end{equation}
where $|S^+|$ denotes the bag number in $S^+$, and $C \in R^+$ is a hyper-parameter to control the relative weighting between the twin goals of making $\hat{R}^u_L$ small and of ensuring that $f$ satisfies the bag constrains of MIL. In this work, we set $C=10$.

\section{Experiment}
%In this section, we perform empirical study on our proposed algorithm.

\subsection{Choose Loss Function}
%According to our aforementioned discussion, there are two constrains that the loss function $L$ should satisfy. First, it should be Bayes consistent. Second, it should be bounded. We have demonstrated that both $L_{MSE}$ and $L_{CE}$ satisfy the first constrain. In addition, it is easy to obtain that the value range of $L_{MSE}$ is $(0, 1)$ and that of $L_{CE}$ is $(0, + \infty)$. This means that $L_{MSE}$ is bounded while $L_{CE}$ is not. Therefore, in this work, we apply $L_{MSE}$ as the loss function if without further illustration.

In this work, we consider the mean square error (MSE) loss function $L_{MSE}$ and the cross entropy (CE) loss function $L_{CE}$ to perform the task. Denote $w=\sigma(f(\bm{x}))$ where $\sigma$ is the sigmoid function, we can check that $L_{MSE}(f(\bm{x}), y_{\bm{x}}) = (w-y_{\bm{x}})^2=(\frac{1}{2}-2(w-\frac{1}{2})(y_{\bm{x}}-\frac{1}{2}))^2$ and its corresponding $\Phi=(\frac{1}{2}-x)^2$ when $g(\bm{x})=2(\sigma(f(\bm{x})-\frac{1}{2})$. Therefore, $L_{MSE}$ is Bayes consistent. We can also check that $L_{CE}(f(\bm{x}), y_{\bm{x}}) = -y\log w-(1-y)\log(1-w)=\log(1+e^{-2(y-\frac{1}{2})f(y_{\bm{x}})})$ and its corresponding $\Phi(x)=\log (1+e^{-x})$ when $g(\bm{x})=2f(\bm{x})$. Therefore, $L_{CE}$ is also a Bayes consistent loss function. In addition, it is easy to know that the value range of $L_{MSE}$ is $(0, 1)$ and that of $L_{CE}$ is $(0, + \infty)$. Therefore, $L_{MSE}$ is bounded while $L_{CE}$ is not. In this work, we apply $L_{MSE}$ as the loss function if without further illustration.

\subsection{Datasets}
To perform the empirical study, we designed experiments on four commonly used datasets, that is, MNIST \cite{lecun1998gradient}, SVHN \cite{netzer2011reading}, CIFAR10 \cite{krizhevsky2009learning}, and 20Newsgroup \cite{lang1995newsweeder}. A half classes of each dataset was mapped to the positive class and the other half ones were mapped to the negative class. For example, digits of 0-4 of MNIST were mapped to the positive class and those of 5-9 were mapped to the negative class. See \textbf{Appendix $\textbf{B}^1$} for label mapping detail of each dataset. For each dataset, we generated a training bag set and a testing bag set, respectively. Each bag set contains 3,000 positive bags and 3,000 negative bags. Let $D^+$ and $D^-$ denotes the positive and negative example set, respectively. Let $U(1, 9)$ denote a uniform distribution over integers ranging from 1 to 9. Each bag was generated in the following procedure: for each negative bag, we first sample the bag size $n$ from $U(1, 9)$, and then we randomly sample with replace $n$ negative examples from $D^-$; for each positive bag, we first sample the bag size $n$ from $U(1, 9)$ and the positive example number $m$ from $U(1, n)$, then we randomly sample $m$ with replace positive examples from $D^+$ and $n-m$ negative examples from $D^-$. Note that, to avoid overlap between training and testing bag sets, we solely sampled examples from the training data when constructing the training bag set and only sampled examples from the testing data when constructing the testing bag set. 

\begin{figure}[t!]
\centering
\includegraphics[width=\columnwidth]{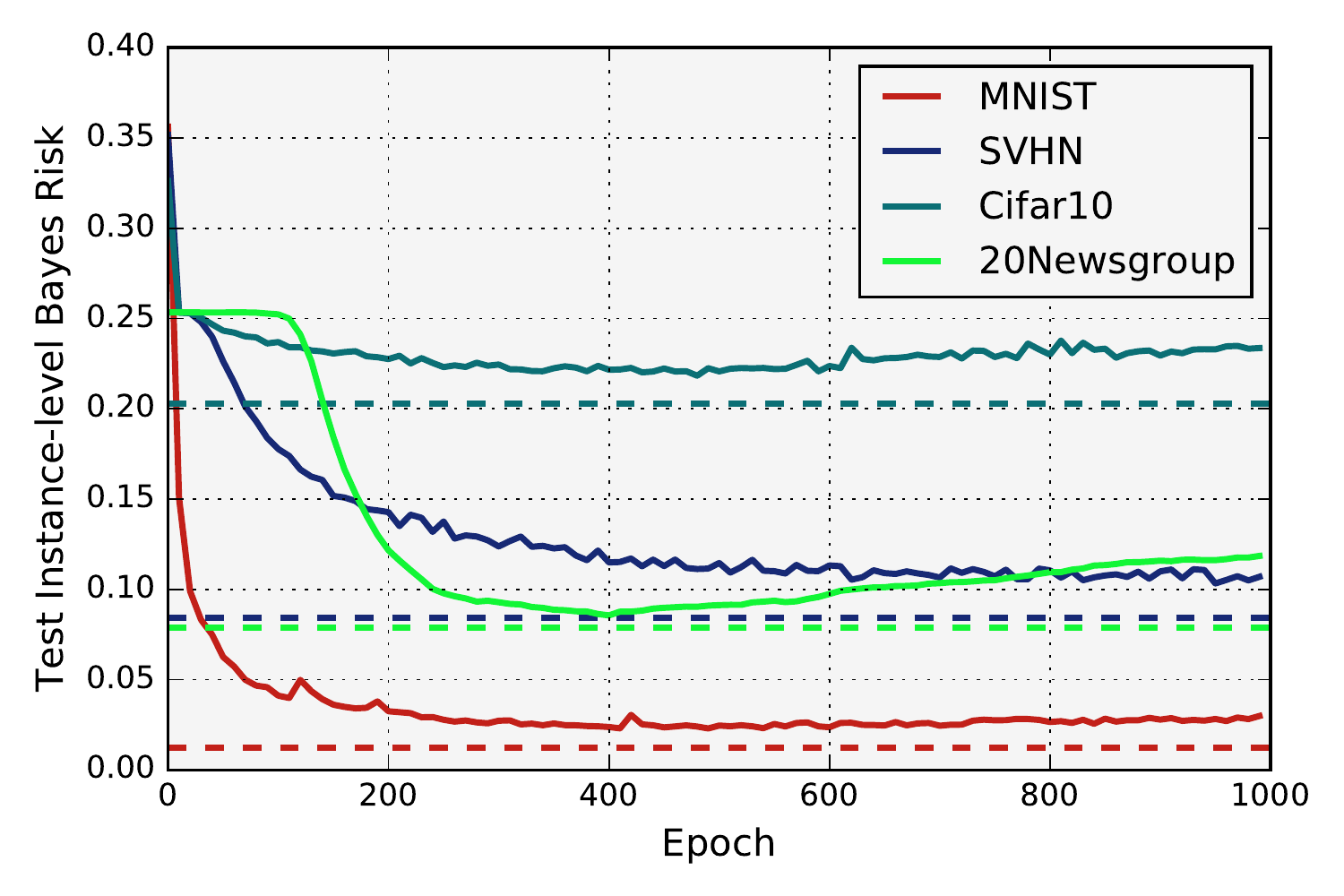}
\caption{Test Bayes risk of IMIL for the instance-level label prediction over epochs. The dot line of each dataset denotes the minimum Bayes risk achieved by the Sup model.}
\label{fig:ins_bayes}
\end{figure}

\begin{figure}[t!]
\centering
\includegraphics[width=\columnwidth]{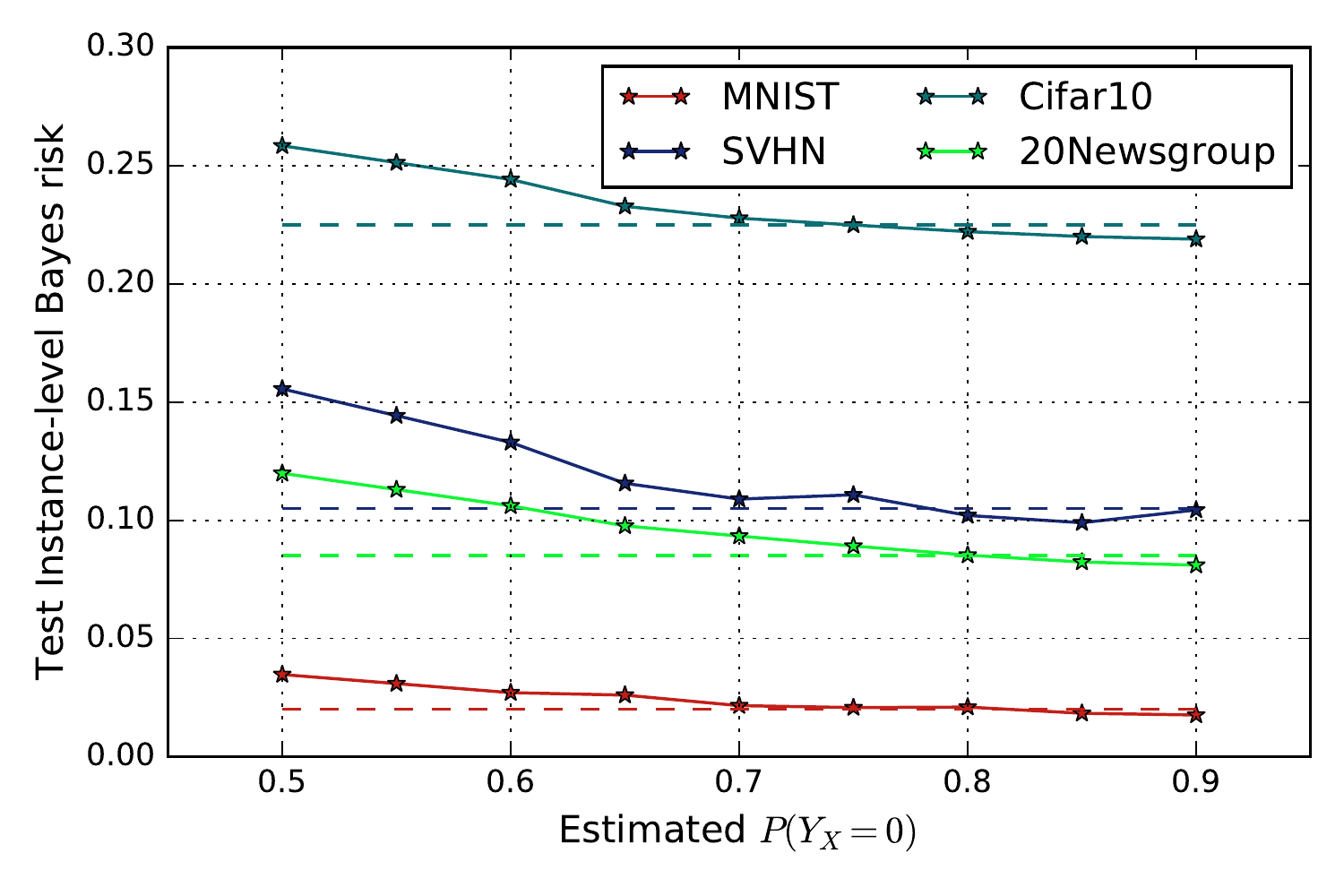}
\caption{Instance-level test Bayes risk of IMIL when using different values to estimate the true value of $\text{P}(\rm{Y}_{\rbm{X}}=0)$ (0.757). The dot line in the same color denotes the best performance of IMIL using the true value of $\text{P}(\rm{Y}_{\rbm{X}}=0)$.}
\label{fig:pi_study}
\end{figure}

\subsection{Compared Methods}
To evaluate the performance of our proposed algorithm, called both bag- and instance-level multiple instance learning (\textbf{BIMIL}), we introduced one of its variants, referred to \textbf{IMIL}, for a comparision. It only performs instance-level risk minimization (minimize $\hat{R}^u_L$) and does not consider the bag constrains of MIL. In addition, we introduced some typical MIL methods proposed in previously published works. They included three instance-level approaches, i.e., the diversity density (\textbf{DD}) \cite{maron1998framework} with one subject, multiple instance logistic regression (\textbf{MILR}) \cite{ray2005supervised}, and miNet \cite{wang2018revisiting}, and two bag-level approaches, i.e., \textbf{miFV} \cite{wei2017scalable}, and the neural-network based multiple instance learning with gated attention (\textbf{MIGA}) \cite{ilse2018attention}. For miFV, we extended an image into a pixel vector as the instance representation and we used the TF-IDF vector as the instance representation for the 20Newsgroup data. Finally, we compared it with the fully supervised model (\textbf{Sup}) trained on the instance-level labels.

\subsection{Implementation Detail}
For the image datasets, i.e., MNIST, SVHN, and Cifar10, we used two convolutional layers following two fully-connected layers to implement the classifier $f$. All of these layers except the last one applied the relu activation functions. 
For 20Newsgroup, each instance was represented by the top 4,000 TF-IDF features and the classifier was implemented using two fully-connected layer with Softplus activation functions. Model parameters were all randomly initialized. 
Parameter updating was implemented using the RMSprop \cite{graves2013generating} optimizer with learning rate set to be 1e-4. Within the experiments, we assumed that the value of $\text{P}(\rm{Y}_{\rbm{X}}=0)$ is knowns if without further illustration. See \textbf{Appendix B} for detail information for model implementation. 

\subsection{Instance-level Results}

\begin{figure*}[t!]
\begin{center}
\subfloat[SVHN]      {
\includegraphics[width=.47\linewidth, height=4.5cm]{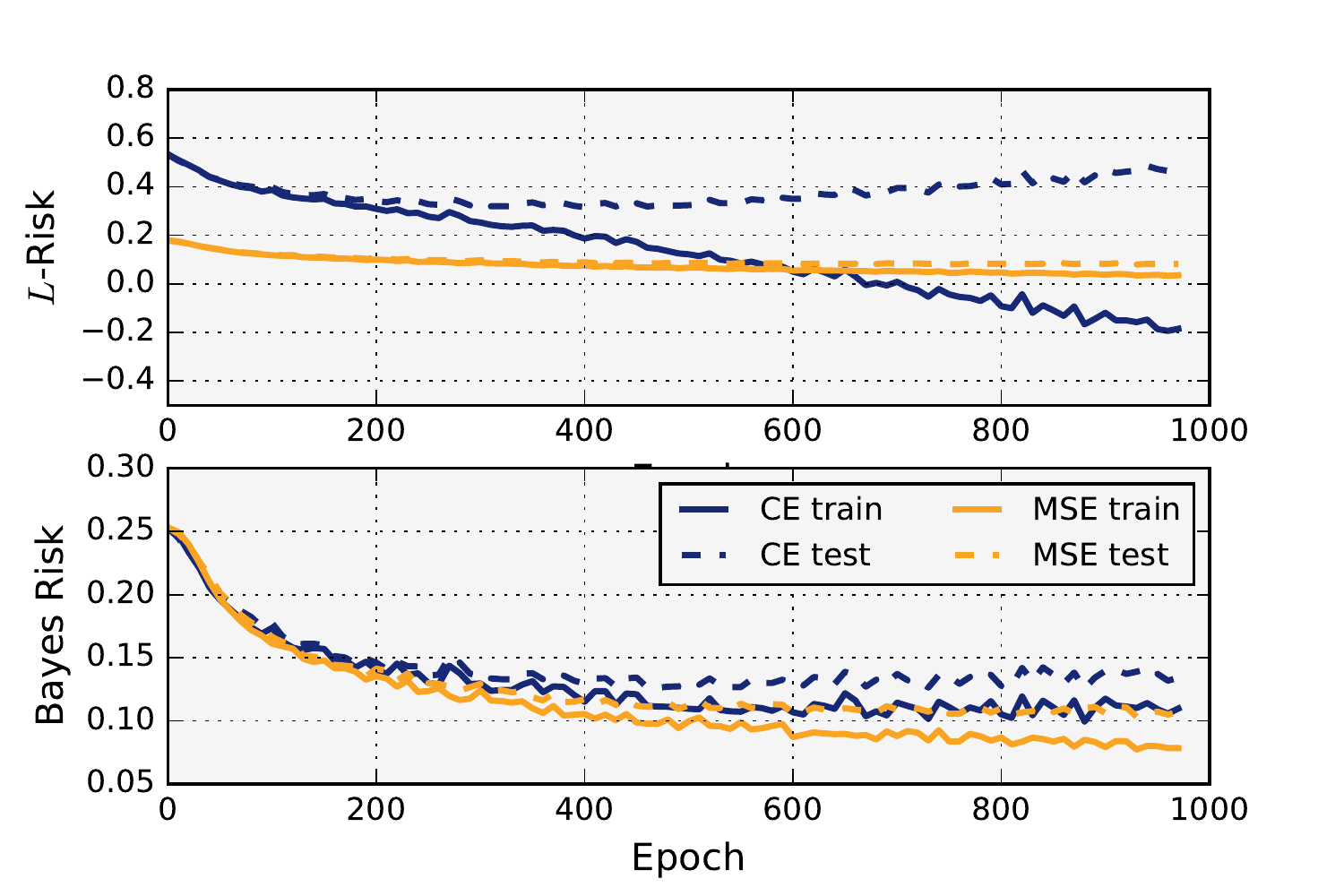}
}\quad
\subfloat[20Newsgroup]      {
\includegraphics[width=.47\linewidth, height=4.5cm]{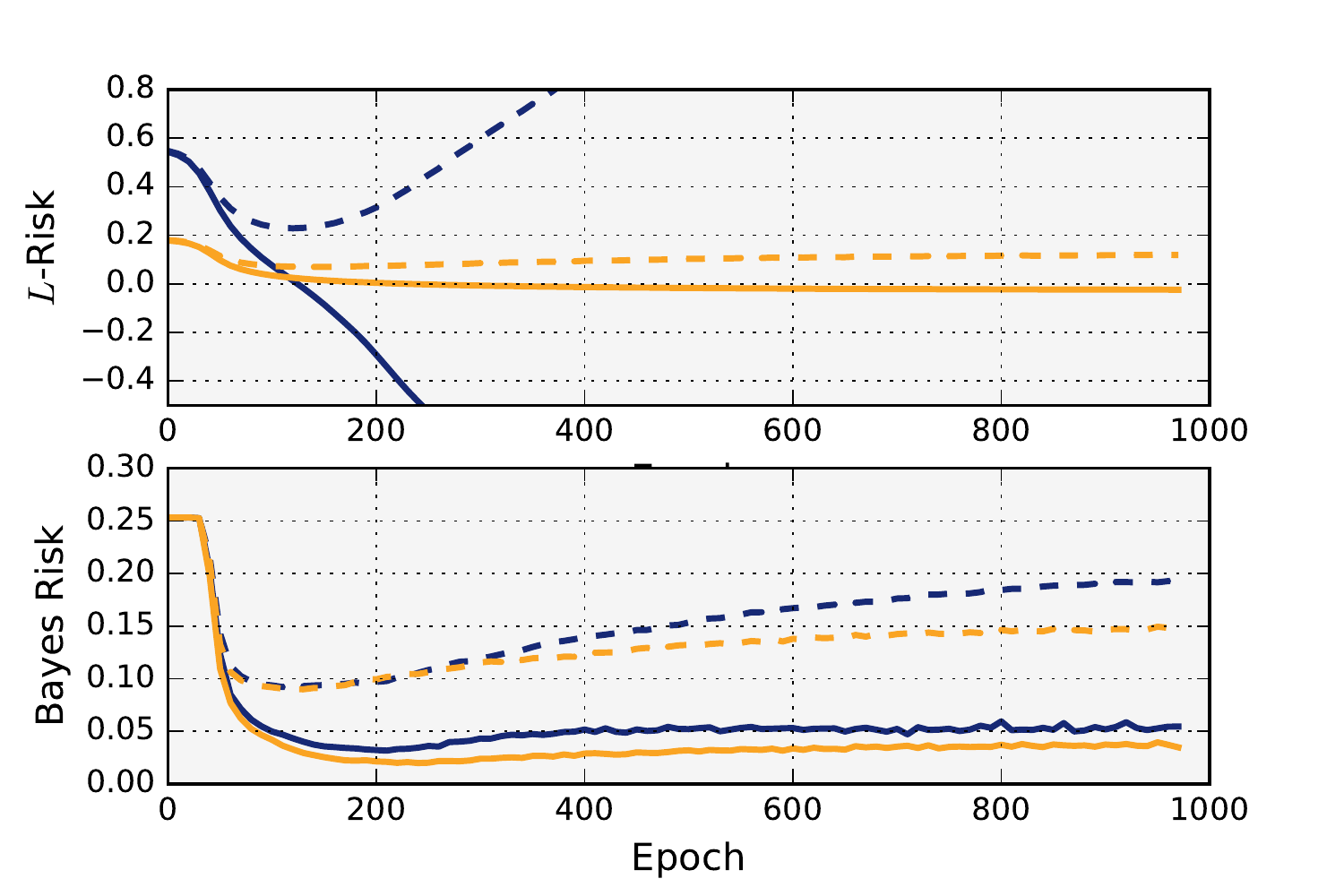}
}
\end{center}
\caption{Train-Test risk using different loss functions $L$ over epochs, on (a) SVHN Dataset and (b) 20Newsgroup Dataset. These figures share the same legend depicted in the bottom figure of (a).} 
\label{fig:loss_function}
\end{figure*}

\begin{table}[t]
\centering
\begin{tabular}{lcccc}
\toprule
Model  	& MNIST		& SVHN 	& Cifar10 	&20Newsgroup\\  \midrule	
Sup 		& 1.2	   	& 8.4	& 20.4		& 7.9\\ 
DD [1]	 		& 1.9  		& 12.8	& 22.2		& 8.9\\	
MILR [2] 	 	& 2.0	   	& 12.5	& 22.9		& 9.2\\
miNet [3]	 	& 2.1	   	& 12.3	& 22.7		& 9.5\\ \midrule
BIMIL  		& 1.7 	   	& 10.2	& 21.3		& 8.0\\
\bottomrule
\end{tabular}
\caption{Instance-level test Bayes risk ($\times 100$). The performance bottomline is 25.4 by predicting all instances as negative. [1]\protect \cite{maron1998framework}, [2]\protect\cite{ray2005supervised}, [3]\protect \cite{wang2018revisiting}.}
\label{table:instance_level_result}
\end{table}

We first studied the effectiveness of our proposed risk estimation method, i.e., IMIL. To perform this study, we compared its Bayes risks on test data with that of Sup. The experiment results are depicted in Figure \ref{fig:ins_bayes}. We can see that the minimum instance-level Bayes risks achieved by IMIL were all close to those of Sup on all of the tested four datasets.
We further compared the instance-level performance of our proposed algorithm, BIMIL, with two instance-level baselines, i.e., DD and MILR. Table \ref{table:instance_level_result} shows the results of these models on the four tested datasets. We can observe that BIMIL outperformed both DD and MILR for label prediction at the instance level and achieved compared results with the fully supervised model Sup. These observations verify the effectiveness of our proposed instance-level risk estimation method under the assumption of this work. 

We then studied the influence of loss function. We mainly studied what if the loss function is unbounded. To this end, we compared the performance of IMIL when using the mean square loss $L_{MSE}$ and the cross entropy loss $L_{CE}$, respectively, to instantiate $L$. Figure \ref{fig:loss_function} depicts the tendency of $L-risk$ and the Bayes risk by epoch when using these two loss functions on SVHN and 20Newsgroup datasets. From this figure, we can see that when using the cross entropy loss function to instantiate $L$, the $L$-risk on training data dropped quickly after a few epochs while that on testing data incrementally increased. This indicates that the model severely overfits training data. In the contrast, if we used the means square loss function to instantiate $L$, the overfitting was significantly reduced and achieved better instance-level classification performance on testing data. This is consistent with our analysis about the necessarity of boundness of loss function in the "Consistency with Bounded Bayes Consistent Loss Function" section.

Finally, we studied the influence of using biased value of $\text{P}(\rm{Y}_{\rbm{X}}=0)$ to the proposed algorithm. 
According to our data generation process, we have $\text{P}(\rm{Y}_{\rbm{X}}=0) \in [0.5, 0.9]$. The minimum value 0.5 occurs when every positive bag comprises of positive instances only, and the maximum value 0.9 occurs when every positive bag contains only one positive instance. Therefore, we tried the value ranging from 0.5 to 0.9 by 0.05. Figure \ref{fig:pi_study} depicts the performance of IMIL using different values of $\text{P}(\rm{Y}_{\rbm{X}}=0)$. We can first observe from the figure that IMIL performs quite robust around the true value of $\text{P}(\rm{Y}_{\rbm{X}}=0)$. In addition, for some over-estimated values, it even performs better than using the true value. 
Finally, we can see that it usually performs better for IMIL with over-estimated values than with under-estimated values. 

\subsection{Bag-level Results}
To evaluate the bag-level performance of the proposed algorithm, we compared it with the five published baselines and its variant IMIL. Table \ref{table:bag_level_results} shows the performance of these models. We can first observe that the proposed algorithm, BIMIL, achieved comparable results with state-of-the-art MIL approaches for the bag-level label prediction. We can also observe that IMIL, 
which does not incorporate with constrains of MIL, also achieve acceptable results compared with existent models. This shows the applicability of our proposed algorithm for the bag-level label prediction. Of course, by comparing the results of IMIL and BIMIL, we know that incorporating with constrains of MIL can bring further improvement to the bag-level label prediction. Overall, the above experiments proved the effectiveness of our proposed algorithm for both the bag- and instance- level label prediction in MIL. 

\begin{table}[t]
\centering
\resizebox{\columnwidth}{!}{
\begin{tabular}{lcccc}
\toprule
Model	& MNIST & SVHN 	& Cifar10 	& 20Newsgroup  \\  \hline %\midrule
Sup		& 2.8	& 14.7	& 28.7		& 10.8 \\ \midrule
DD [1]		& \textbf{2.7}	& \textbf{13.6}	& 31.7		& 12.6\\
MILR [2] 	& 3.6	& 15.7	& 33.1		& 14.3\\
miNet [3] 	 	& 3.7	& 15.3	& 33.7		& 13.5\\ 
miFV [4]	& 3.8	& 15.8	& 34.2		& 15.0\\
MIGA [5] 	& 3.1	& 15.1	& 32.4		& \textbf{11.4}\\ \midrule
IMIL 	& 2.9	& 15.7	& 31.8		& 13.3\\
BIMIL 	& \textbf{2.7}	& \textbf{13.6}	& \textbf{31.0}		& 13.0\\
\bottomrule
\end{tabular}}
\caption{Bag-level test Bayes risk ($\times 100$). Bold values indicate the best performance within methods not using instance labels. [1]\protect \cite{maron1998framework}, [2]\protect\cite{ray2005supervised}, [3]\protect \cite{wang2018revisiting}, [4]\protect\cite{wei2017scalable}, [5]\protect\cite{ilse2018attention}.}\label{table:bag_level_results}
\end{table}

\section{Conclusion}
This work proposes a novel multiple instance learning algorithm to address the instance-level label prediction problem in MIL. Different from existent MIL approaches, the loss function of our proposed algorithm was specifically defined on the instance-level label prediction without using instance labels. We theoretically prove the effectiveness of this algorithm by using a bounded Bayes consistent loss function, under the i.i.d assumption. Experimental studies on both image and text datasets show that the proposed algorithm can achieve comparative performance for both the bag- and instance- level label prediction. 

\bibliographystyle{ijcai19}
\fontsize{10pt}{10.6pt} \selectfont
\bibliography{MIL}

\appendix

\section{Proof of Theorem 1}
Let denote $L_M$ the Lipschitz constant that $L_M > \frac{\partial L(w, y)}{\partial w}, \forall w \in R$, denote $C_0 = \max_y L(0, y)$, and denote $\mathcal{H}$ a Reproducing Kernel Hilbert Space. For each given $R > 0$, we consider as hypothesis space $\mathcal{H}_R$, the ball of radius $R$ in the RKHS $\mathcal{H}$. Let $N(\epsilon)$ be the covering number of $\mathcal{H}_R$ following Theorem C in \cite{cucker2002mathematical}. Let denote $n$ the instance number in $S^+$ and $m$ that in $S^-$. Then we have the following theorem. 
\begin{theorem}{1}
For a Bayes consistent loss function $L$, if it is bounded by $[0, M]$, then for any $\epsilon > 0$, 
\begin{equation}
\begin{split}
&\rm{P}\{S \in \mathcal{D}|\sup_{f \in \mathcal{H}_R} |{R}_L-\hat{R}_L^u| \leq \epsilon\} \\
&\geq 1 - 2N(\frac{\epsilon}{4(1+2\pi_0)L_M}) e^{-\frac{m\epsilon^2}{8(1+2\pi_0)^2B^2}},
\end{split}         
\end{equation}
where $\pi_0=\rm{P}(\rm{Y}_{\rbm{X}}=0)$ and $B = L_M M + C_0$.
\end{theorem}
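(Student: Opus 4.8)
The plan is to establish this uniform deviation bound by the classical route of combining a covering-number discretization of $\mathcal{H}_R$ with a pointwise concentration inequality, using the unbiasedness identity from Theorem~1 as the crucial input that centers the empirical estimator at $R_L$.

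First I would show that the map $f \mapsto R_L(f) - \hat{R}_L^u(f)$ is Lipschitz in the supremum norm on $\mathcal{H}_R$. Since $L$ has Lipschitz constant $L_M$ in its first argument, for $\|f - f'\|_\infty \le \delta$ each of the two terms appearing in $R_L$ (cf. Eq.~(\ref{eq:eq_15})) changes by at most $L_M\delta$ in the first summand and by at most $2\pi_0 L_M \delta$ in the $\pi_0$-weighted difference summand, so $|R_L(f) - R_L(f')| \le (1+2\pi_0)L_M\delta$, and identically for $\hat{R}_L^u$. Hence the deviation $|R_L - \hat{R}_L^u|$ varies by at most $2(1+2\pi_0)L_M\delta$. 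Choosing $\delta = \epsilon/\left(4(1+2\pi_0)L_M\right)$ forces this variation to be at most $\epsilon/2$, which is exactly why the covering number is evaluated at $\epsilon/\left(4(1+2\pi_0)L_M\right)$. Taking a $\delta$-net $\{f_1,\dots,f_{N(\delta)}\}$ of $\mathcal{H}_R$ in the $C(\mathcal{X})$ norm (finite by Theorem~C of \cite{cucker2002mathematical}), the event $\sup_{f}|R_L - \hat{R}_L^u| > \epsilon$ forces $|R_L(f_j) - \hat{R}_L^u(f_j)| > \epsilon/2$ for some net point $f_j$.

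Next, for a \emph{fixed} $f_j$ I would apply Hoeffding's inequality. By Theorem~1, $\mathbb{E}[\hat{R}_L^u(f_j)] = R_L(f_j)$, so the estimator is already correctly centered and no bias term must be controlled. Under the i.i.d.\ assumption the instances are independent, so $\hat{R}_L^u(f_j)$ is a sum of independent bounded contributions: each instance enters the first term with coefficient $1/(n+m)$, and each negative-bag instance additionally enters the $\pi_0$-weighted difference with coefficient $\pi_0/m$. Using the bound $L(f(\bm{x}),y) \le C_0 + L_M|f(\bm{x})| \le C_0 + L_M M = B$ valid on $\mathcal{H}_R$, together with $n+m \ge m$, the summands combine so that $\sum(\text{range})^2 \le 4(1+2\pi_0)^2 B^2/m$; Hoeffding at level $\epsilon/2$ then yields $\text{P}\{|R_L(f_j) - \hat{R}_L^u(f_j)| > \epsilon/2\} \le 2\exp\!\left(-\frac{m\epsilon^2}{8(1+2\pi_0)^2B^2}\right)$. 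A union bound over the $N(\delta)$ net points gives the claimed inequality.

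The main obstacle I anticipate is the bookkeeping that makes the final rate depend on $m$ alone. The estimator mixes two different normalizations, namely an average of $L(f(\bm{x}),1)$ over all $n+m$ instances and an average of the loss difference over the $m$ negative-bag instances, and the bag sizes $n,m$ are themselves random. I would handle this by conditioning on the instance counts and treating the negative-bag instances, the smaller and binding sample, as governing the concentration, bounding $n+m \ge m$ to absorb the first-term fluctuation into the $m$-dependent rate; the resulting slack is what produces the factor of $2$ inside the range estimate and hence the constant $8$. A secondary technical point is justifying the passage from the $C(\mathcal{X})$ net to the RKHS ball for the covering number, and the unbiasedness of the first term, which relies on the assumptions in Eqs.~(\ref{eq:6})--(\ref{eq:8}) that let the pooled instances stand in for draws from the marginal $\text{P}(\rbm{X})$.
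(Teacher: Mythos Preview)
Your proposal is correct and reaches the stated bound, but the decomposition you use differs from the paper's. The paper does not apply Hoeffding to the whole estimator $\hat R_L^u(f)$ at each net point. Instead, it uses the identity of Eq.~(\ref{eq:eq_15}) to split $R_L-\hat R_L^u$ into three separate ``expectation minus empirical mean'' terms---one over all $n+m$ instances with target label $1$, and two over the $m$ negative-bag instances with target labels $0$ and $1$ respectively---and then invokes the uniform deviation lemma of \cite{rosasco2004loss} (itself a covering-plus-Hoeffding result) as a black box on each term. Each piece is bounded by $\epsilon$ uniformly over $\mathcal H_R$ with the stated probability, the triangle inequality gives $|R_L-\hat R_L^u|\le(1+2\pi_0)\epsilon$, the worst of the three rates is the $m$-sample one, and the final substitution $\epsilon\to\epsilon/(1+2\pi_0)$ produces the theorem. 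Your route is a little more economical: by grouping contributions per instance and running Hoeffding once, you avoid invoking the lemma three times and sidestep the combination step (where the paper in fact takes the \emph{minimum} of three probability lower bounds rather than a union bound, losing a constant). The paper's approach, on the other hand, is more modular and makes the role of $m$ as the binding sample size completely transparent, which you had to recover by the $n+m\ge m$ bookkeeping.
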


\begin{proof}
Let denote $\hat{R}_{L}(f)$ the empirical estimation of $R_L(f)$ with $k$ randomly labeled examples. Since $L$ is bounded, $C_0$, $M$, and $B$ are finite. According to the Lemma in \cite{rosasco2004loss} we have:
\begin{equation} \label{eq:lemma}
\begin{split}
        &\rm{P}\{S \in \mathcal{D}|\sup_{f \in \mathcal{H}_R} |{R}_L(f)-\hat{R}_L(f)| \leq \epsilon\} \\
        &\geq 1 - 2N(\frac{\epsilon}{4L_M}) e^{-\frac{k\epsilon^2}{8B^2}},
\end{split}   
\end{equation}
The empirical estimation error of ${R}_L(f) - \hat{R}^u_L(f)$ can be written as:
\begin{equation}
\begin{split}
&R_L(f) - \hat{R}^u_L(f) \\
&= \left(\mathbb{E}_{\rbm{X}}L(f(\bm{x}, 1)-\frac{1}{\sum_{\bm{b}\in S} |\bm{b}|} \sum_{\bm{b}\in S} \sum_{\bm{x} \in \bm{b}} L(f(\bm{x}), 1)\right) \\
& + \pi_0\left(\mathbb{E}_{\rbm{X}|\rbm{B}=0}L(f(\bm{x}, 0)-\frac{1}{\sum_{\bm{b}\in S^-} |\bm{b}|} \sum_{\bm{b}\in S^-} \sum_{\bm{x} \in \bm{b}} L(f(\bm{x}), 0)\right)\\
& + \pi_0\left(\mathbb{E}_{\rbm{X}|\rbm{B}=0}L(f(\bm{x}, 1)-\frac{1}{\sum_{\bm{b}\in S^-} |\bm{b}|} \sum_{\bm{b}\in S^-} \sum_{\bm{x} \in \bm{b}} L(f(\bm{x}), 1)\right)
\end{split}
\end{equation}
Thus,
\begin{equation}
\begin{split}
&|R_L(f) - \hat{R}^u_L(f)| \\
&\leq \left|\mathbb{E}_{\rbm{X}}L(f(\bm{x}), 1)-\frac{1}{\sum_{\bm{b}\in S} |\bm{b}|} \sum_{\bm{b}\in S} \sum_{\bm{x} \in \bm{b}} L(f(\bm{x}), 1)\right| \\
& + \pi_0\left|\mathbb{E}_{\rbm{X}|\rbm{B}=0}L(f(\bm{x}), 0)-\frac{1}{\sum_{\bm{b}\in S^-} |\bm{b}|} \sum_{\bm{b}\in S^-} \sum_{\bm{x} \in \bm{b}} L(f(\bm{x}), 0)\right|\\
& + \pi_0\left|\mathbb{E}_{\rbm{X}|\rbm{B}=0}L(f(\bm{x}), 1)-\frac{1}{\sum_{\bm{b}\in S^-} |\bm{b}|} \sum_{\bm{b}\in S^-} \sum_{\bm{x} \in \bm{b}} L(f(\bm{x}), 1)\right|
\end{split}
\end{equation}
Let $\rm{I}_L(\rbm{X})$ denote $$\mathbb{E}_{\rbm{X}}L(f(\bm{x}, 1)-\frac{1}{\sum_{\bm{b}\in S} |\bm{b}|} \sum_{\bm{b}\in S} \sum_{\bm{x} \in \bm{b}} L(f(\bm{x}), 1).$$ According to Eq. \ref{eq:lemma}, we have:
\begin{equation}
\begin{split}
&\rm{P}\{S \in \mathcal{D}|\sup_{f \in \mathcal{H}_R} |\rm{I}_L(\rbm{X})| \leq \epsilon\} \\
& \geq 1-2N(\frac{\epsilon}{4L_M}) e^{-\frac{(m+n)\epsilon^2}{8B^2}}
\end{split}
\end{equation}
Similarly, let $\rm{I}_L(\rbm{X}|\rbm{B}=0, 0)$ denote $$\mathbb{E}_{\rbm{X}|\rbm{B}=0}L(f(\bm{x}, 0)-\frac{1}{\sum_{\bm{b}\in S^-} |\bm{b}|} \sum_{\bm{b}\in S^-} \sum_{\bm{x} \in \bm{b}} L(f(\bm{x}), 0),$$ and $\rm{I}_L(\rbm{X}|\rbm{B}=0, 10)$ denote $$\mathbb{E}_{\rbm{X}|\rbm{B}=0}L(f(\bm{x}, 1)-\frac{1}{\sum_{\bm{b}\in S^-} |\bm{b}|} \sum_{\bm{b}\in S^-} \sum_{\bm{x} \in \bm{b}} L(f(\bm{x}), 1),$$
we have:
\begin{equation}
\begin{split}
&\rm{P}\{S \in \mathcal{D}|\sup_{f \in \mathcal{H}_R} |\rm{I}_L(\rbm{X}|\rbm{B}=0, 0)| \leq \epsilon\} \\
& \geq 1-2N(\frac{\epsilon}{4L_M}) e^{-\frac{m\epsilon^2}{8B^2}},
\end{split}
\end{equation}
and 
\begin{equation}
\begin{split}
&\rm{P}\{S \in \mathcal{D}|\sup_{f \in \mathcal{H}_R} |\rm{I}_L(\rbm{X}|\rbm{B}=0, 1)| \leq \epsilon\} \\
& \geq 1-2N(\frac{\epsilon}{4L_M}) e^{-\frac{m\epsilon^2}{8B^2}},
\end{split}
\end{equation}
Therefore, 
\begin{equation}
\begin{split}
&\rm{P}\{S \in \mathcal{D}|\sup_{f \in \mathcal{H}_R} |R_L(f)-\hat{R}_L^u(f)| \leq (1+2\pi_0)\epsilon \} \\
& \geq \min(1-2N(\frac{\epsilon}{4L_M}) e^{-\frac{m\epsilon^2}{8B^2}}, 1-2N(\frac{\epsilon}{4L_M}) e^{-\frac{(m+n)\epsilon^2}{8B^2}} ) \\
&= 1-2N(\frac{\epsilon}{4L_M}) e^{-\frac{m\epsilon^2}{8B^2}}
\end{split}
\end{equation}
The theorem follows replacing $\epsilon$ with $\frac{1}{1+2\pi_0}\epsilon$.
\end{proof}

\section{Implementation Detail}
\begin{table*}[]
\centering
\begin{tabular}{lcc}
\toprule
Dataset  	& Positive			& Negative  	\\  \midrule	
MNIST 	 	& 0-4               & 5-9 	\\
SVHN  	 	& 0-4			    & 5-9	\\  \midrule
Cifar10  	& \makecell[c]{"airplane", "automobile"\\ "bird", "cat", "deer"}			& \makecell[c]{"dog", "frog"\\ "horse", "ship", "truck"}\\  \midrule
20Newsgroup & \makecell[c]{"alt.atheism", "comp.graphics", "sci.crypt"\\ "comp.os.ms-windows.misc", "comp.windows.x"\\
"comp.sys.ibm.pc.hardware", "sci.space" \\"comp.sys.mac.hardware",  "sci.electronics", "sci.med"} & \makecell[c]{ "misc.forsale",
 "rec.autos"\\
 "rec.motorcycles",
 "rec.sport.baseball"\\
 "rec.sport.hockey", 
  "soc.religion.christian"\\
 "talk.politics.guns",
 "talk.politics.mideast"\\
 "talk.politics.misc",
 "talk.religion.misc"} 	\\ 
\bottomrule
\end{tabular}
\caption{Class mapping for each dataset.}
\label{table:label_mapping}
\end{table*}

Table \ref{table:label_mapping} shows classes mapped to the positive class and the negative class respectively in our experiments for multiple instance learning.

Table \ref{table:image_net} and \ref{table:text_net} show the model architectures used in our experiments on the three image datasets and the 20Newsgroup, respectively. 

\begin{table}[t]
\centering
\resizebox{\columnwidth}{!}{\begin{tabular}{cccccccc}
\toprule
&Layer	&Operation	&channels	&width	&height	\\
&0		&Input		&1	&28	&28	 \\
&1		&Conv (relu)		&32	&28	&28	\\
&2		&Max pool	&32	&14	&14	\\
&3		&Conv (relu)		&32	&14	&14	\\
&4	&Max pool	&32	&7	&7	\\
&6	&Dense	(relu) &512\\
&7	&Dropout	&512	\\
&8	&Dense	(sigmoid)		&1	\\
 \bottomrule
\end{tabular}}
\caption{Model architecture for image datasets.}
\label{table:image_net}
\end{table}

\begin{table}[t]
\centering
{\begin{tabular}{cccccccc}
\toprule
&Layer	&Operation			&dimension\\
&0		&Input				&4,096	\\
&1		&Dense (softplus)	&256\\
&2		&Dense (softplus)	&64	\\
&3		&Dense	(sigmoid)	&1	\\
 \bottomrule
\end{tabular}}
\caption{Model architecture for 20Newsgroup.}
\label{table:text_net}
\end{table}

\end{document}